%% file: calm_paper_nodiff.tex
\pdfoutput=1
\documentclass[conference,letterpaper,10pt,nofonttune]{IEEEtran}
\IEEEoverridecommandlockouts
\usepackage[left=0.76in,right=0.76in,top=0.75in,bottom=0.81in,
            columnsep=0.2in,noheadfoot]{geometry}
\renewcommand{\IEEEtitletopspaceextra}{0.17in}
\makeatletter
\def\@IEEEauthorblockNstyle{\normalfont\fontsize{11}{13}\selectfont}
\makeatother
\IEEEaftertitletext{\vspace{-16bp}}
\usepackage[utf8]{inputenc}
\usepackage[T1]{fontenc}
\usepackage[protrusion=false]{microtype}
\usepackage[hidelinks]{hyperref}
\usepackage{amsmath,amssymb,amsfonts}
\usepackage{graphicx}
\usepackage{booktabs}
\usepackage{multirow}
\usepackage{xcolor}
\usepackage{tikz}
\usetikzlibrary{arrows.meta,positioning,fit,calc}
\usepackage[caption=false,font=footnotesize]{subfig}
\usepackage{balance}
\usepackage{url}
\usepackage{soul}
\usepackage{makecell}
\usepackage[normalem]{ulem}  %
\usepackage{xcolor}          %
\makeatletter
\@ifundefined{iflatexml}{\newif\iflatexml}{} 
\makeatother

\newtheorem{lemma}{Lemma}
\input{results_macros}

\usepackage{ulem}
\usepackage{xcolor}

\begin{document}

\title{\fontsize{16}{20}\selectfont\bfseries CALM: Decentralized Control of Cooperative Aerial Manipulation with Multi-Agent Reinforcement Learning}
\author{
\IEEEauthorblockN{
Jiaxi Huo,
Petre Ricioppo,
Fausto Francesco Lizzio$^{*}$
}
\thanks{The authors are with the Department of Mechanical and Aerospace Engineering, Politecnico di Torino, Torino, Italy (e-mail: \{jiaxi.huo, petre.ricioppo, fausto.lizzio\}@polito.it). }
\thanks{$^{*}$Corresponding author.}

}
\maketitle

\begin{abstract}
Cooperative aerial manipulation is difficult under wind, payload motion, and noisy goal broadcasts because multiple Unmanned Aerial Vehicles (UAVs) are coupled through a shared payload.
We present CALM (Cooperative Aerial Learned Manipulation), a decentralized controller for aerial manipulators carrying a basket while tracking a moving ground robot.
CALM combines a parameter-shared Multi-Agent Reinforcement Learning (MARL) policy, trained using Multi-Agent Proximal Policy Optimization (MAPPO), with goal estimation, per-UAV Extended State Observers (ESOs), basket-mass estimation, and anticipatory formation-yaw feedforward.
The anticipatory formation-yaw feedforward uses estimated goal motion to rotate the formation before position error develops, reducing basket-yaw lag without adding net commanded force.
A disturbance-conditioned critic supports centralized training while execution remains decentralized.
Across held-out trajectories, CALM achieves lower mean position and orientation errors than all evaluated baselines, including the matched arm baseline (\curveCalmMean{} versus \curvePriorMean\,cm and \curveCalmOri{} versus \curvePriorOri$^\circ$), the cable-suspended realization, and two model-based controllers.
Under unseen full-information-path degradation, CALM remains statistically comparable or superior to the arm baseline.
Videos are available at 
\href{https://petreric.github.io/CALM/}{https://petreric.github.io/CALM/}.

\end{abstract}

\section{Introduction}
\label{sec:introduction}

Transporting a shared payload with multiple Unmanned Aerial Vehicles (UAVs) is challenging because payload dynamics and interaction forces couple UAV motions.
These issues become even more prominent in outdoor agricultural environments, where the wind acts as a persistent disturbance, and variations in the payload mass distribution can cause the Center of Mass (CoM) to shift over time.

\begin{figure}[t]
\centering
\includegraphics[width=\columnwidth]
{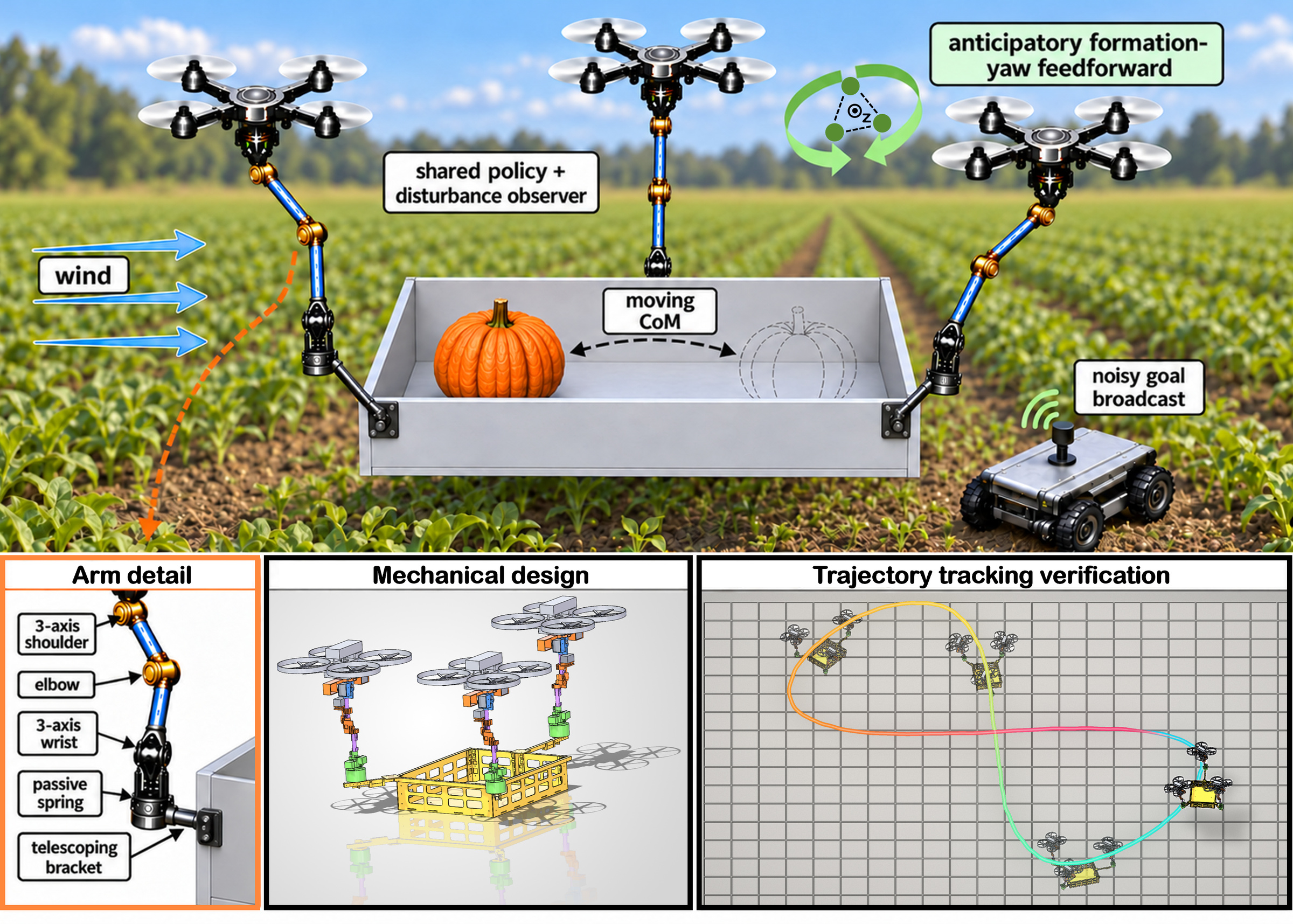}
\caption{
Illustration of three aerial manipulators carrying a shared basket while tracking a ground robot from noisy goal broadcasts under wind and a moving CoM. Lower insets show the developing hardware design and verification.
}
\label{fig:concept}
\end{figure}

Model-based cooperative transport controllers directly allocate payload wrenches and enforce coupling constraints, often with stability guarantees \cite{michael2011,lee2018}. However, they remain sensitive to model mismatch, compliance, and payload variation.

Learning-based control complements them by training across plant and disturbance distributions, as in recent work combining Reinforcement Learning (RL) with model-reference adaptation \cite{kannan2025}, robust model predictive control \cite{deng2025}, or control-barrier-constrained Multi-Agent RL (MARL) \cite{zhou2025}.
Decentralized MARL for cooperative transport has been explored solely on cable-suspended payloads \cite{zeng2025,lorentz2025}, whose unilateral interface leaves payload orientation weakly actuated.

Thus, in this work, we consider aerial manipulators, defined as UAVs equipped with an articulated arm that is rigidly coupled to the shared payload. This configuration provides direct control authority over the payload pose, but also introduces additional sources of complexity, including structural compliance, actuator saturation, and redundant interaction-force distribution.

We consider an air-ground setting in which three aerial manipulators carry a collection basket, a payload whose mass varies with its contents, across terrain with limited ground accessibility, while a ground robot performs the field task.
The aerial team tracks the robot from noisy goal broadcasts rather than a preplanned trajectory, exploiting its differential-drive motion for local coordinated-turn prediction.
Three aerial manipulators provide the minimum non-collinear formation for planar-force and yaw-moment allocation, while allowing asymmetric load sharing under a moving basket CoM. We therefore present CALM (Cooperative Aerial Learned Manipulation), a MARL-based decentralized controller.

Our contributions are as follows:
\begin{itemize}
    \item We formulate a decentralized cooperative aerial-manipulation task in which aerial manipulators carry a shared basket and track a ground robot from noisy goal broadcasts alone.
    \item We provide CALM with goal estimation, disturbance estimation, and an anticipatory formation-yaw feedforward.
    \item We validate CALM against arm-based and cable-suspended baselines, showing that it achieves the lowest position and orientation errors and retains its margin under degraded information paths.
\end{itemize}

This paper is organized as follows. In Sec.~\ref{sec:related}, a review of related works is provided. Sec.~\ref{sec:problem} outlines the multibody model and control objective, and Sec.~\ref{sec:method} presents CALM's control and learning architecture. In Sec.~\ref{sec:results}, CALM is evaluated against learned baselines and model-based controllers, while Sec.~\ref{sec:conclusion} provides some concluding remarks.

\section{Related Work}
\label{sec:related}

Classical work formulates cooperative aerial transport as grasp-map wrench allocation \cite{michael2011} or geometric control on the Special Euclidean group \(SE(3)\) \cite{lee2018}, later extended to model uncertainty \cite{sanalitro2020} and whole-body optimization \cite{sun2025}.
RotorTM \cite{li2024rotortm} cooperatively controls rigidly attached payloads.

To reduce model dependence, data-driven aerial control learns residual dynamics for model-based prediction \cite{belkhale2021,nieto2024safe} or maps observations directly to commands within adaptive or robust control \cite{liu2022ddpg}.
Examples include Deep Adaptive Trajectory Tracking with reference feedforward and disturbance estimation \cite{huang2023datt}, learned whole-body control for underactuated aerial manipulators \cite{deshmukh2026,singh2026}, and Gaussian-process compensation for leader-follower cooperative transport \cite{nieto2026leader}.

Decentralized MARL treats all UAVs as peers.
Centralized Training with Decentralized Execution (CTDE) trains each agent's policy, called the actor, with a critic that sees global information \cite{yu2022}.
Privileged critics that see training-only information \cite{baisero2022}, distributional critics \cite{dabney2018}, and domain randomization \cite{peng2018} have been studied individually.
The work of \cite{zeng2025} shows communication-free full-pose control of a cable-suspended payload using shared local policies.
CrazyMARL \cite{lorentz2025} extends it to direct motor commands and slack-taut cable transitions.
However, both \cite{zeng2025,lorentz2025} retain passive cable interfaces that transmit tension only, whereas rigid arms transmit bidirectional forces and moments \cite{kim2018,ollero2022}. Moreover, both provide the payload goal pose directly to the UAVs.

CALM instead combines explicit goal, mass, and disturbance estimates with a shared decentralized policy coordinating UAVs and arms without centralized online optimization.

\section{System Model and Problem Formulation}
\label{sec:problem}

\subsection{Coupled multibody plant}
Fig.~\ref{fig:concept} shows the multibody plant in which each UAV carries a two-link arm whose seven actuated joints comprise a three-axis shoulder, an elbow, and a three-axis wrist. A passive spring and a telescoping bracket join the wrist center, called the arm tip, to the basket.
The lag-limited compliant chain transmits torque and actuates the basket orientation indirectly.

Let \(N=3\) and let $i\in\{1,\ldots,N\}$ index the aerial manipulators.
The generalized coordinate \(q=[q_v,q_a,q_B]\) collects the three UAV poses, the arm coordinates, and the basket pose, respectively.
The coupled dynamics are
\begin{equation}
M(q)\ddot q+C(q,\dot q)\dot q+g_q(q)
=\mathcal B(q)u+\phi_c(q,\dot q)+\phi_d(q,\dot q,t).
\label{eq:multibody}
\end{equation}
Here, \(M(q)\), \(C(q,\dot q)\), and \(g_q(q)\) are the nominal inertial, Coriolis and centrifugal, and gravity terms, while \(\mathcal B(q)\) maps the UAVs and arms inputs \(u\).
The term \(\phi_c(q,\dot q)\) is the finite-stiffness arm-basket coupling, including the passive spring and bracket forces, and \(\phi_d(q,\dot q,t)\) contains wind and moving-CoM disturbances.
The basket rows of \eqref{eq:multibody}
\begin{align}
m_B\ddot p_B &= \textstyle\sum_i f_i + f_w + m_B\mathbf g, \label{eq:trans}\\
J_B\dot\omega_B+\omega_B\times J_B\omega_B
&=\textstyle\sum_i(r_i\times R_B^\top f_i+\tau_i)\notag\\
&\quad+\tau_w+\tau_{\rm CoM}.
\label{eq:rot}
\end{align}
are written in world-frame for translation and in basket-frame for rotation.
Here, \(p_B,\omega_B,m_B,J_B\) are the position, angular velocity, mass, and inertia of the basket with its contents, and \(\mathbf g\) is the gravity vector. Matrix \(R_B\) in the Special Orthogonal group \(SO(3)\) maps basket vectors to the world frame.
Arm \(i\) applies the wrench \((f_i,\tau_i)\) at its attachment point with basket-frame offset \(r_i\) from the basket CoM, and \((f_w,\tau_w)\) and \(\tau_{\rm CoM}\) are the wind wrench and moving-CoM torque modeled below.

\paragraph{Moving center of mass}
The CoM includes the basket contents. Its basket-frame offset $c_t\in\mathbb R^3$ at time $t$ follows an Ornstein-Uhlenbeck (OU) \cite{uhlenbeck1930theory} process at constant mass,
\begin{equation}
\mathrm dc_t=-\theta_c\,c_t\,\mathrm dt+\sigma_c\,\mathrm dW_t,
\label{eq:ou}
\end{equation}
where \(\theta_c>0\), \(\sigma_c\), and \(W_t\) are the mean-reversion rate, the diffusion scale, and the three-dimensional Wiener process.
The resulting stationary standard deviation \(\sigma_c/\sqrt{2\theta_c}\), called the CoM deviation, sets the amplitude of the CoM offset. However, the value \(c_t\) is bounded component-wise by vector \(\bar c=(\bar c_x,\bar c_y,\bar c_z)\), thereby bounding the induced gravity torque in the basket frame,
\begin{equation}
\tau_{\rm CoM}=c_t\times m_B R_B^\top\mathbf g.
\label{eq:comtorque}
\end{equation}

\paragraph{Wind disturbance}
The wind is modeled as a constant force plus gusts following an OU process similar to \eqref{eq:ou}.
On the basket, the disturbance enters \eqref{eq:trans}--\eqref{eq:rot} as
\((f_w,\tau_w)\), whereas on the UAVs it is included in
\(\phi_d(q,\dot q,t)\) in \eqref{eq:multibody}.

\subsection{Control objective}
\label{sec:sensing}

The ground robot generates online a goal \(g_t=(p_t^g,\psi_t^g)\) for the position and the yaw of the basket.
The desired altitude is fixed \textit{a priori}, while the desired pitch and roll angles are zero. However, wind, CoM torque, and unequal arm loads may tilt the basket, so the full orientation is tracked.
Each UAV receives \(g_t\) only through a broadcast with additive Gaussian position and yaw noise of standard deviations \(\sigma_p\) and \(\sigma_\psi\), and computes its commands on board. Then, the control objective is the decentralized tracking of the basket pose under the considered disturbances.
Each arm tip tracks its attachment point on a basket at \(g_t\), and each UAV chooses its own position.

\section{CALM Control and Learning Architecture}
\label{sec:method}

\begin{figure*}[t]
\centering
\input{fig_architecture.tex}
\caption{CALM architecture with per-UAV execution (solid), team-bus signals (blue dashed), training-only parts (amber), and unmeasured disturbances (dash-dot). Orange marks evaluation-only degradation of the full information path (Sec.~\ref{sec:info}). Symbols are defined in Secs.~\ref{sec:problem}--\ref{sec:method}.}
\label{fig:architecture}
\end{figure*}

In this Section, an overview of CALM architecture is provided.
Fig.~\ref{fig:architecture} illustrates CALM's deterministic estimation and control modules (Secs.~\ref{sec:estimation}--\ref{sec:ff}), separately from its decentralized RL policy and centralized training (Secs.~\ref{sec:policy}--\ref{sec:critic}).
During training, the critic uses the centralized state, the privileged disturbances, and the shared reward to guide policy updates. During execution, the actor receives the estimated system information as part of its local observation, while the deterministic modules provide additional compensation to the commanded control input.

\subsection{Online estimation}
\label{sec:estimation}

Each UAV runs the following estimators for the goal $g_t$, the translational and rotational disturbances, and the basket mass.

\paragraph{Goal estimation}
An \(\alpha\)-\(\beta\) filter estimates the basket goal pose \(\hat g=(\hat p,\hat\psi)\) and rates \((\hat v,\hat\omega)\) from the noisy broadcast generated by the ground robot.
Each packet \(y=(y^p,y^\psi)\) is a noisy measurement of \(g_t=(p_t^g,\psi_t^g)\).
The filter propagates the estimates at constant velocity between packets. In particular, for each delivered packet, it predicts \(\hat p^-=\hat p+\hat v\Delta t_p\) over one policy period \(\Delta t_p\) and updates
\begin{equation}
\begin{alignedat}{2}
\hat p&\leftarrow\hat p^-+\alpha(y^p-\hat p^-),\enspace &
\hat v&\leftarrow\hat v+\tfrac{\beta}{\Delta t_p}(y^p-\hat p^-),\\
\hat\psi&\leftarrow\hat\psi+\alpha_\psi\mathrm{wrap}(y^\psi-\hat\psi),\enspace &
\hat\omega&\leftarrow\hat\omega+\alpha_\psi\big(\tfrac{\Delta\hat\psi}{\Delta t_p}-\hat\omega\big),
\end{alignedat}
\label{eq:goalfilter}
\end{equation}
where \(\mathrm{wrap}(\cdot)\) maps an angle to \([-\pi,\pi)\) and \(\Delta\hat\psi\) is the change of \(\hat\psi\) over the last period.
The basket position and orientation errors relative to \(\hat g\), together with \((\hat v,\hat\omega)\), form the goal-relative error vector \(\hat\chi_t^g\).

The current goal estimates are employed in a differential-drive (coordinated-turn) kinematic model to predict the goal pose at \(K\) future instants \(t+h_k\), \(k=1,\dots,K\).
Denoting as $\hat v_s$ and $\hat v_z$ the planar norm and vertical component of $\hat v$, the predicted goal pose is
\begin{align}
& \hat\psi_k=\hat\psi+\hat\omega h_k, \notag \\
& \hat p(t{+}h_k)=\hat p+
\begin{bmatrix}
(\hat v_s/\hat\omega)(\sin\hat\psi_k-\sin\hat\psi)\\
(\hat v_s/\hat\omega)(\cos\hat\psi-\cos\hat\psi_k)\\
\hat v_zh_k
\end{bmatrix}.
\label{eq:ctrv}
\end{align}
with the constant-velocity limit used as \(\hat\omega\to 0\).
At each look-ahead time \(h_k\), the predicted position offset and rotation vector relative to \(\hat g\) form the preview feature \(\hat\delta_{t,k}^g\in\mathbb R^6\).

\paragraph{On-board disturbance and basket-mass estimation}

Each UAV uses two Extended State Observers (ESOs) to estimate the unmodeled rotational dynamics of the basket and translational dynamics of the UAV \cite{cao2024}. The rotational ESO employs the measured angular velocity $\omega_B$ and the modeled angular acceleration \(u_m\) to estimate the basket angular velocity and the residual angular acceleration, which we represent as the observer states \(z_1,z_2\in\mathbb R^3\) with the UAV index omitted for clarity. The discrete-time dynamics evolve as
\begin{equation}
\begin{aligned}
z_1&\leftarrow z_1+\big(z_2+u_m+2\omega_o(\omega_B-z_1)\big)\Delta t_c,\\
z_2&\leftarrow z_2+\omega_o^2(\omega_B-z_1)\Delta t_c.
\end{aligned}
\label{eq:eso}
\end{equation}
Here, \(\Delta t_c\) is the low-level control period, \(\omega_o\) is the observer bandwidth, and \(u_m=J_B^{-1}(r_i\times R_B^\top f_i+\tau_i-\omega_B\times J_B\omega_B)\) keeps only the UAV's own arm wrench in \eqref{eq:rot}. The disturbance estimate is bounded before being used for compensation.
For a lumped angular-acceleration disturbance \(d_\omega\) with bounded rate \(\dot d_\omega\), the ideal continuous-time observer satisfies
\begin{equation}
\lVert z_2-d_\omega\rVert_\infty
\le \frac{2\lVert\dot d_\omega\rVert_\infty}{\omega_o},
\qquad
\lVert z_1-\omega_B\rVert_\infty
\le \frac{\lVert\dot d_\omega\rVert_\infty}{\omega_o^2},
\end{equation}
up to a decaying transient. These bounds determine the choice of \(\omega_o\).
The translational ESO follows the same observer structure. It uses the UAV velocity as the measured state and \(f_i^{\rm th}/m_v+\mathbf g\) as the modeled acceleration, where \(m_v\) is the UAV mass and \(f_i^{\rm th}\) is its world-frame thrust. Its second observer state provides the residual acceleration estimate \(\hat d^{\,i}\), which is bounded before being used to correct the low-level acceleration command and is also provided to the local policy.

The basket mass is estimated online because the objects it carries vary.
Each UAV first estimates the vertical force exerted by its arm from the UAV--arm force balance.
The estimates from all UAVs are then combined to recover the total vertical force acting on the basket. Assuming negligible vertical wind, this force together with the measured basket vertical acceleration provides
an instantaneous estimate of the basket mass.

Let \(e_3=[0,0,1]^\top\) denote the world-frame vertical unit vector. Let \(p_i\) and \(p_\ell\) be the world-frame CoM positions of UAV \(i\) and arm body \(\ell\), respectively, and define \(a_{(\cdot),z}=e_3^\top\ddot p_{(\cdot)}\) for \(p_i,p_B,p_\ell\). We also define \(T_{i,z}=e_3^\top f_i^{\rm th}\) and \(g_0=-e_3^\top\mathbf g>0\).
The vertical force \(e_3^\top f_i\) that arm \(i\) exerts on the basket, with \(f_i\) from \eqref{eq:trans}, is estimated as
\begin{equation}
\hat f_{i,z}
=
T_{i,z}
-m_s(a_{i,z}+g_0)
-\sum_{\ell\in\mathcal L_i}m_\ell(a_{\ell,z}+g_0),
\end{equation}
where \(m_s\) is the known mass of the UAV and its co-located arm actuators, and \(\mathcal L_i\) contains the arm bodies \(\ell\) of mass \(m_\ell\). Summing the force estimates across the UAVs and applying the vertical component of \eqref{eq:trans} gives the instantaneous basket-mass estimate
\begin{equation}
\hat m_B^{\rm inst}
=
\frac{\sum_i\hat f_{i,z}}{a_{B,z}+g_0}.
\end{equation}
Only the scalar \(\sum_i\hat f_{i,z}\) is exchanged between the agents, so no individual force estimate is shared. The resulting estimate is guarded against small denominators, clipped to \([m_{\min},m_{\max}]\), and smoothed into \(\hat m_B\) by a filter of time constant \(t_m\) that updates only near hover.

\subsection{Deterministic feedforward compensation}
\label{sec:ff}

The wrists have limited yaw range, so the basket yaw is primarily controlled through rotation of the entire formation. Without anticipatory feedforward, the formation begins to rotate only in response to the developing tracking error, resulting in yaw-tracking lag during turns. The filter in Section \ref{sec:estimation} already provides an estimate of the turn rate \(\hat{\omega}\), which CALM uses to anticipate the required formation rotation.
The demanded moment is
\begin{equation}
M_z^{\rm ff}
=
I_{\rm eff}
\big[
\dot{\hat\omega}
+
K_\omega
(\hat\omega+t_{\rm lead}\dot{\hat\omega}-\omega_{B,z})
\big],
\label{eq:yawmoment}
\end{equation}
where \(K_\omega\) is the yaw-rate feedback gain, \(t_{\rm lead}\) determines the amount of rate anticipation, and
\begin{equation}
I_{\rm eff}
=
m_v\sum_{j=1}^{N}\lVert\hat\rho_j\rVert^2+J_{B,z}
\end{equation}
accounts for both the formation inertia and the basket yaw inertia. Here, \(\hat\rho_j\) is the planar position of UAV \(j\) relative to the formation centroid, obtained from the UAV position estimates exchanged through the team bus (Sec.~\ref{sec:info}), \(J_{B,z}\) is the basket yaw moment of inertia, and \(\omega_{B,z}\) is the basket yaw rate about \(e_3\). The term \(\dot{\hat\omega}\) is obtained by filtering the finite difference of the estimated goal turn rate.

The desired moment is realized by assigning tangential accelerations to the UAVs according to
\begin{equation}
a_i^{\rm yaw}
=
\frac{M_z^{\rm ff}}
{m_v\sum_j\lVert\hat\rho_j\rVert^2}\,
e_3\times\hat\rho_i,
\qquad
\sum_i a_i^{\rm yaw}=0.
\label{eq:yawallocation}
\end{equation}
This allocation produces the desired rotational effect without introducing a net translational acceleration of the formation. Since \(\hat\rho_i\) comes from estimated UAV positions, Lemma~\ref{prop:alloc} quantifies the resulting yaw-moment error.

\begin{lemma}[Allocation under estimated geometry]
\label{prop:alloc}
Let \(\rho_i\) be the true horizontal UAV offsets from the formation centroid and \(\hat S=\sum_j\lVert\hat\rho_j\rVert^2\).
For any \(\rho_i\), \eqref{eq:yawallocation} yields zero net force, and its yaw moment about the centroid is \(M_z^{\rm alloc}=\kappa M_z^{\rm ff}\), with \(\kappa=\hat S^{-1}\sum_i\rho_i\cdot\hat\rho_i\).
If \(\max_i\lVert\rho_i-\hat\rho_i\rVert\le\varepsilon\), then \(\lvert\kappa-1\rvert\le\varepsilon\sum_i\lVert\hat\rho_i\rVert/\hat S\), which equals \(\varepsilon/\hat r\) for equal estimated radii \(\hat r\).
\end{lemma}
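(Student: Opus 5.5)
The plan is to compute the net force and the yaw moment of the allocation \eqref{eq:yawallocation} directly, treating each UAV as a point mass \(m_v\) and viewing all vectors in the horizontal plane.

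Write \(c=M_z^{\rm ff}/(m_v\hat S)\). The force on UAV \(i\) is \(F_i=m_v a_i^{\rm yaw}=c\,m_v\,e_3\times\hat\rho_i\).

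\textbf{Net force.} Summing gives \(\sum_iF_i=c\,m_v\,e_3\times\sum_i\hat\rho_i\). This vanishes because \(\hat\rho_i\) are offsets from the estimated centroid, so \(\sum_i\hat\rho_i=0\) by construction. The true offsets \(\rho_i\) do not enter, which is why the claim holds for any \(\rho_i\).

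\textbf{Yaw moment.} The forces act at the true positions, so the moment about the true centroid is
\[
M_z^{\rm alloc}=\textstyle\sum_i e_3^\top(\rho_i\times F_i)=c\,m_v\sum_i e_3^\top\big(\rho_i\times(e_3\times\hat\rho_i)\big).
\]
Apply the identity \(a\times(b\times c)=b(a\cdot c)-c(a\cdot b)\). Since \(\rho_i\cdot e_3=0\) for horizontal vectors, the term reduces to \(e_3(\rho_i\cdot\hat\rho_i)\). Therefore
\[
M_z^{\rm alloc}=c\,m_v\textstyle\sum_i\rho_i\cdot\hat\rho_i=\kappa M_z^{\rm ff}.
\]
Because the net force is zero, the moment is the same about any reference point. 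So using the true centroid or the estimated one does not matter, which removes the only subtlety in this step.

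\textbf{Bound on \(\kappa\).} Write
\[
\kappa-1=\hat S^{-1}\textstyle\sum_i(\rho_i-\hat\rho_i)\cdot\hat\rho_i .
\]
Cauchy–Schwarz on each term, together with \(\lVert\rho_i-\hat\rho_i\rVert\le\varepsilon\), gives \(\lvert\kappa-1\rvert\le\varepsilon\sum_i\lVert\hat\rho_i\rVert/\hat S\). For equal radii \(\hat r\), we have \(\sum_i\lVert\hat\rho_i\rVert=N\hat r\) and \(\hat S=N\hat r^2\), so the ratio is \(\varepsilon/\hat r\).

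\textbf{Modeling assumptions.} No step is a genuine obstacle; the main care point is stating the modeling assumptions. These are that \(a_i^{\rm yaw}\) acts as an applied planar force \(m_va_i^{\rm yaw}\) at the true UAV position, that all offsets are horizontal, and that \(\hat\rho_i\) are measured from the estimated centroid so that \(\sum_i\hat\rho_i=0\).
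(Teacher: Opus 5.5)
Your proof is correct and matches the paper's argument step by step: the net force vanishes because $\sum_i\hat\rho_i=0$, the vector triple product reduces the yaw moment to $(M_z^{\rm ff}/\hat S)\sum_i\rho_i\cdot\hat\rho_i$, and Cauchy--Schwarz on $\kappa-1=\hat S^{-1}\sum_i(\rho_i-\hat\rho_i)\cdot\hat\rho_i$ yields the bound. The differences are cosmetic: you drop the second triple-product term using $\rho_i\cdot e_3=0$, whereas the paper notes that $\hat\rho_i$ has no $e_3$ component, and your observation that zero net force makes the moment independent of the reference point is a valid addition the paper does not need.
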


The proof is provided in Appendix~\ref{app:lemma-proof}. The allocated yaw moment thus keeps the sign of \(M_z^{\rm ff}\) if the bound is below one.

Two more feedforward paths use the team mean \(\bar z_2=N^{-1}\sum_i z_2^i\) of the rotational ESO states on the team bus.
Its roll and pitch components set zero-sum vertical-acceleration offsets forming a load-share couple against \(J_B\bar z_2\), and its yaw component adds a common offset to all wrist-yaw targets.
Both are filtered, ramped, clipped, and active in training.

\subsection{Decentralized RL policy}
\label{sec:decpomdp}
\label{sec:policy}

At each policy step, agent \(i\) builds its local observation from its own state and the online estimates of Sec.~\ref{sec:estimation} as
\begin{align}
\xi^i_t&=\big(x^i_t,\;\hat d^{\,i},\;z^i_2,\;\hat m_B,\;\hat\chi_t^g,\;
\{\hat\delta_{t,k}^g\}_{k=1}^{K}\big),\nonumber\\
o^i_t&=\big(\xi^i_t,\xi^i_{t-1},\xi^i_{t-2}\big),
\label{eq:obs}
\end{align}
where \(o_t^i\) stacks \(\xi_t^i\) for three consecutive time frames. The block \(x_t^i\) contains the UAV index, the basket and UAV states, seven arm-joint positions and rates, and the attachment-point error \(e_t^i=R_B^\top(p_t^{{\rm tip},i}-\hat p-R_z(\hat\psi)\,r_i)\). Here, \(p_t^{{\rm tip},i}\) is the arm-tip position and \(R_z(\hat\psi)\) is the yaw rotation of \(\hat g\).
The remaining blocks are the ESO outputs \(\hat d^{\,i}\) and \(z^i_2\), the basket-mass estimate \(\hat m_B\), all read at step \(t\), the goal-relative error vector \(\hat\chi_t^g\), and the \(K\) preview features \(\hat\delta_{t,k}^g\).
The action \(a^i_t\in\mathbb R^{12}\) has three acceleration references, roll- and pitch-rate references with UAV yaw held fixed, and seven arm-joint targets, tracked by a faster low-level controller.

The parameter-shared Multi-Layer Perceptron (MLP) actor
\(\pi_\theta(a_t^i\mid o_t^i)\) maps this observation to the local action,
with Multi-Agent Proximal Policy Optimization (MAPPO) optimizing the shared policy parameters \(\theta\) under CTDE
\cite{yu2022,schulman2017}.
Attachment geometry and UAV index distinguish the roles, and
\(\nabla_\theta\) in Fig.~\ref{fig:architecture} denotes the gradient of the actor objective.

\subsection{Centralized RL training}
\label{sec:critic}

\paragraph{Distributional critic}
The distributional MAPPO critic is disturbance-conditioned and used only during training. It receives the centralized state \(s_t\), which comprises the UAV, arm, basket, and goal states, together with a privileged vector \(\eta_t\) of the wind forces acting on the UAVs and the basket and the CoM torque, which the actors never observe.
It predicts \(N_q\) return quantiles \(Z_j(s_t,\eta_t)\), \(j=1,\ldots,N_q\), by quantile-Huber regression \cite{dabney2018}, with \(\lambda\)-return targets from a first Generalized Advantage Estimation (GAE) pass bootstrapped from their mean.
Once the curriculum (Sec.~\ref{sec:protocol}) ends, a second GAE pass for the actor's advantages ramps its baseline from this mean to the mean of the lowest \(\kappa_qN_q\) quantiles, \(\kappa_q\in(0,1]\).
\label{sec:reward}

\paragraph{Team reward}
All agents receive a multiplicative team reward adapted from~\cite{lorentz2025},
\begin{equation}
R_t=\left(w_p\log\frac{\bar e_p}{\tilde e_{p,t}}
+w_o\log\frac{\bar e_o}{\tilde e_{o,t}}\right)r^{\rm cm}_t+R^{\rm aux}_t,
\label{eq:reward}
\end{equation}
where \(w_p,w_o>0\) are tracking weights, and \(r_t^{\rm cm}\) and \(R_t^{\rm aux}\) are defined below. Then, let  \(p_{B,t},p_t^g\) and \(R_{B,t},R_{g,t}\) be the current and the desired world-frame basket positions and orientations, and let the trace of \(\Delta R_t=R_{g,t}^{\top}R_{B,t}\) represent their angular mismatch. Introduce tracking errors lower and upper bounds \(\varepsilon_p,\varepsilon_o>0\) and \(\bar e_p,\bar e_o\). Then, the error terms in \eqref{eq:reward} are
\[
\begin{gathered}
\tilde e_{p,t}=\operatorname{clamp}
(\lVert p_{B,t}-p_t^g\rVert,\varepsilon_p,\bar e_p),\\
e_{o,t}=\cos^{-1}\!\operatorname{clamp}
\!\left(\frac{\operatorname{tr}(\Delta R_t)-1}{2},-1,1\right),\\
\tilde e_{o,t}=\operatorname{clamp}
(e_{o,t},\varepsilon_o,\bar e_o).
\end{gathered}
\]
During aggressive maneuvers, the UAVs must tilt to generate the accelerations required for basket tracking. 
Directly penalizing tilt would therefore conflict with tracking and could also interfere with the different attitudes required for arm coordination. 
To discourage unnecessary tilt after tracking converges, we introduce the common-mode gate \(r_t^{\rm cm}\). 
The gate remains one during the initial tracking phase and decreases toward a positive floor when the team-mean inclination of the UAV thrust axes, called the common-mode tilt, exceeds a deadband. 
The deadband adapts to the goal acceleration and training-only wind disturbance, so the tilt required by the maneuver or external disturbance is not penalized.
By \eqref{eq:trans}--\eqref{eq:rot}, arm \(i\) contributes the basket linear and angular accelerations
\begin{equation}
  a_i^{\rm arm}=R_B^\top f_i/\hat m_B, \quad
  \dot\omega_i^{\rm arm}=J_B^{-1}(r_i\times R_B^\top f_i+\tau_i).
\end{equation}
All accelerations below are expressed in the basket frame, with subscript \(z\) denoting the basket-frame \(z\)-component.
For desired basket accelerations \(a_t^{\rm des}\) and \(\dot\omega_t^{\rm des}\) from a Proportional-Derivative (PD) law on tracking errors, the auxiliary reward \(R_t^{\rm aux}\) has three clipped terms proportional to
\begin{equation}
  \sum_i(a_i^{\rm arm})^\top a_t^{\rm des},\quad
  \sum_i(\dot\omega_i^{\rm arm})^\top\dot\omega_t^{\rm des},\quad
  \sum_i\dot\omega_{i,z}^{\rm arm}\dot\omega_{t,z}^{\rm des}.
\end{equation}
The angular terms exploit attachment moments unavailable to cables.
Other terms penalize arm/UAV proximity and basket downwash, with collisions above \(2\,\mathrm{N}\) terminating the episode.

\subsection{Information pattern}
\label{sec:info}
Deployed actors receive no peer observations or actions and filter the shared goal locally.
The team bus carries only UAV positions for \(\hat\rho_i\), the basket Inertial Measurement Unit (IMU) state including \(\omega_B\), \(\sum_i\hat f_{i,z}\) and the averaged ESO state \(\bar z_2\).
The goal broadcast and bus form the full information path, with rate, latency, dropout, and outlier defects unseen in training.

\section{Simulation Results}
\label{sec:results}
\label{sec:experiments}

\subsection{Training, evaluation protocol, and compared controllers}
\label{sec:protocol}
\label{sec:compared}

We extend the public Isaac Lab code~\cite{zeng2025,aerialmaniplab2025marl}, keeping its Agilicious-based UAV model~\cite{foehn2022agilicious} and replacing the cable-suspended plant with the aerial-manipulator system of Sec.~\ref{sec:problem}.
Policies are trained from scratch in parallel environments with a curriculum that ramps wind, redraws the steady wind every episode, and randomizes basket mass, moving CoM, initial conditions, and irregular differential-drive goal motion \cite{peng2018}.
A fixed tracking metric on the training distribution selects each learned method's checkpoint, and Table~\ref{tab:params} lists the parameters.

\begin{table}[t]
\caption{Parameter values. $\mathcal U$ is a uniform draw, resampled per episode.}
\label{tab:params}
\centering
\fontsize{8pt}{9pt}\selectfont
\renewcommand{\arraystretch}{0.93}
\begin{tabular}{@{}p{0.31\columnwidth} p{\dimexpr0.69\columnwidth-2\tabcolsep\relax}@{}}
\toprule
Policy / low-level rate & 100/300\,Hz; simulation step 3.33\,ms; episode 80\,s \\
Arm joints (actuated) & 3-axis shoulder ($\pm$90$^\circ$), elbow (0--180$^\circ$),
3-axis wrist ($\pm$90$^\circ$); two 0.125-m links, 50\,g each \\
Joint effort limits & 20\,N\,m shoulder/elbow, 5\,N\,m wrist, 20\,N telescoping \\
Spring / bracket & setpoint PD; spring 200\,N\,m/rad, damping 20;
bracket 250\,N/m, damping 25, travel $\pm$0.10\,m \\
Airframe / total mass & 0.602\,kg per airframe, $\approx$6.57\,kg total \\
\midrule
Wind mean force & $\le$1.15\,N per aerial manipulator and basket
(drag-equivalent wind speed $\approx$\windSpeedTrain\,m/s, curriculum-ramped) \\
Wind turbulence & OU noise, 0.575-N standard deviation, scaled in sweeps; rate
12\,s$^{-1}$; weight 0.9 on a part shared by all bodies \\
Moving CoM, Eq.~\eqref{eq:ou} & $\theta_c{=}1.5$\,s$^{-1}$, CoM deviation
0.10\,m (varied), CoM clipped to $\bar c{=}(10,10,8)$\,cm componentwise \\
\midrule
Goal / broadcast & altitude $\mathcal U[1.4,1.6]$\,m, fixed per episode;
Gaussian noise $\sigma_p{=}0.05$\,m, $\sigma_\psi{=}0.05$\,rad, one packet per policy step \\
Degraded goal broadcast & eval. period $\mathcal U\{1..10\}$ steps;
latency $\mathcal U\{0..8\}$ steps; dropout $3{\times}10^{-3}$/step;
outliers $p{=}10^{-2}$, $10\sigma_p$ \\
Degraded team bus & as above, last packet held; 0.01-m position quantization \\
Goal estimation & $\alpha{=}\alpha_\psi{=}0.2$, $\beta{=}0.02$;
$K{=}8$, $h_k{=}0.5k$\,s \\
\midrule
{Formation-yaw feedforward} &
yaw-rate lead time $t_{\rm lead}{=}0.15$\,s, rate gain $K_\omega{=}3$\,s$^{-1}$ \\
Observers & translational/rotational ESO 20/15\,rad/s;
$t_m{=}1$\,s, $[m_{\min},m_{\max}]{=}[0.5,2.0]$\,kg \\
\midrule
Reward shaping & $w_p{=}2$, $w_o{=}1$;
$\varepsilon_p{=}10^{-3}$\,m, $\varepsilon_o{=}10^{-3}$\,rad;
$\bar e_p{=}3$\,m, $\bar e_o{=}3$\,rad; $r_t^{\rm cm}\in[0.1,1]$ \\
Learning & MAPPO, discount $\gamma{=}0.99$, GAE $\lambda{=}0.95$, clip ratio 0.1; 2048 environments; MLP 1024-512-256-128 \\
Actor observation & 123 features/frame; three-frame stack, \(o_t^i\in\mathbb R^{369}\) \\
Critic & $N_q{=}16$, $\kappa_q{=}0.25$; $\dim s_t{=}187$, $\dim\eta_t{=}15$ \\
\bottomrule
\end{tabular}
\vspace{-4pt}
\end{table}

\begin{figure*}[tp]
\centering
\includegraphics[width=\textwidth]{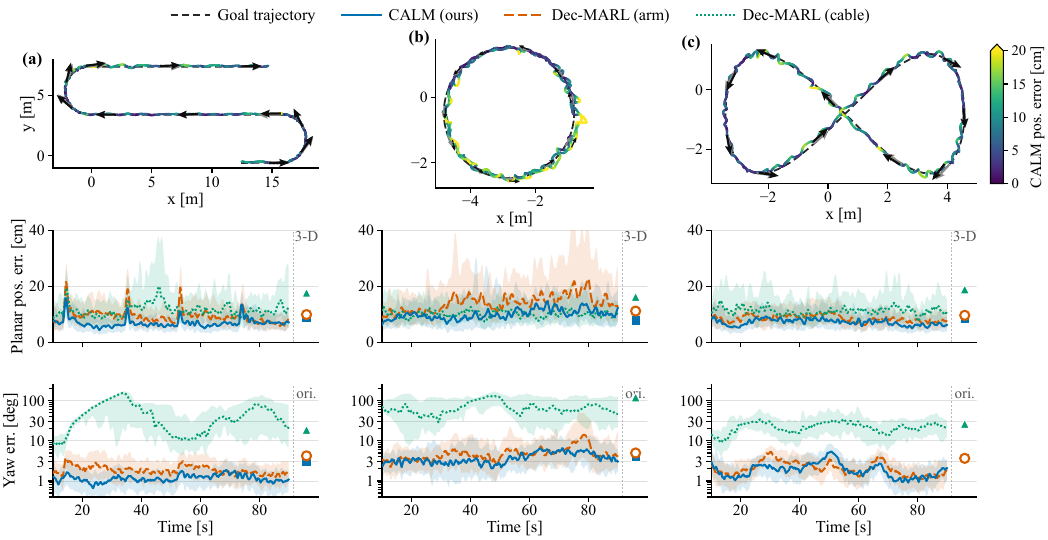}
\caption{Tracking under nominal disturbance. The top panel shows a representative CALM path colored by position error, with the dashed goal trajectory and commanded (gray) and realized (black) basket yaw. The lower panels show median planar position and yaw errors over \traceEnvs{} trace environments recorded separately from the statistical pool, with interquartile shading. Right markers give scoring-window medians of 3-D position and full orientation error.}
\label{fig:tracking}
\end{figure*}

\begin{table*}[tp]
\centering
\caption{Held-out position [cm] / orientation [deg] errors, given as mean $\pm$ SD across \trialsPerCell{} trials of the per-trial environment-time medians.}
\label{tab:matrix}
\setlength{\tabcolsep}{2.4pt}
\resizebox{\textwidth}{!}{%
\begin{tabular}{lcccccc}
\toprule
Method & Boustrophedon & Circle & Figure-8 & Perimeter & Contour &
Chevron\\
\midrule
\input{table_main_body}
\end{tabular}}
\end{table*}

\begin{figure*}[!tp]
\centering
\includegraphics[width=\textwidth]{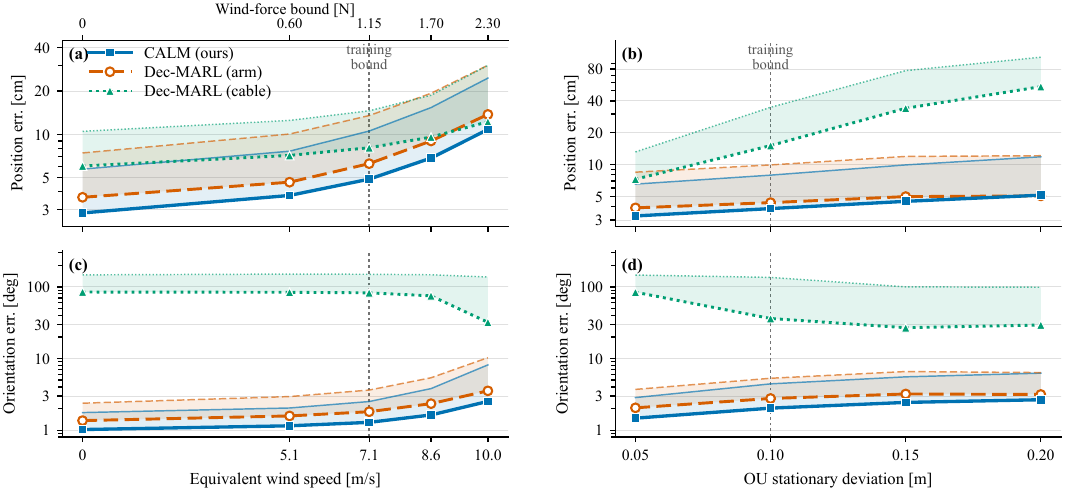}
\caption{Position (a,b) and orientation (c,d) sweeps under wind (left) and CoM deviation (right). Equivalent wind speed matches total mean applied force to system drag, with per-body force on the top axis. Curves show across-trial medians of the per-trial p50 and p90, with \(n=\trialsPerCell\) trials per method.}
\label{fig:sweeps}
\end{figure*}

Table~\ref{tab:matrix} evaluates the six held-out trajectory families sketched in Fig.~\ref{fig:fulldeg} under nominal disturbance, which is the training-bound wind with 0.10-m CoM deviation.
Each condition has \trialsPerCell{} trials with distinct seeds. A trial pools \envsPerTrial{} environments over a 15-s scoring window after a 4-s settle, and its p50 and p90 are the 50th and 90th error percentiles. Table~\ref{tab:matrix} gives the mean and Standard Deviation (SD) of the trial p50.
Figure~\ref{fig:sweeps} independently sweeps wind to twice its training bound and CoM deviation to 0.20\,m over 60-s boustrophedon trials. 
Since the equivalent wind speed matches the total applied force to the system drag, the basket receives a smaller force share than under physical wind.
Arm-platform trials are paired under matched resets and disturbances, whereas the cable code of \cite{zeng2025,aerialmaniplab2025marl} yields distinct episodes per seed.
The 95\% Confidence Intervals (CIs) use 20\,000 percentile-bootstrap resamples of paired differences.

\begin{figure}[tp]
\centering
\subfloat[]{\includegraphics[height=0.286\columnwidth]{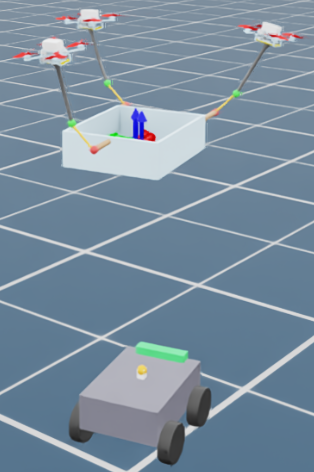}}\hfill
\subfloat[]{\includegraphics[height=0.286\columnwidth]{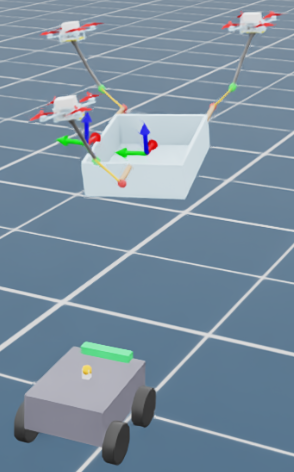}}\hfill
\subfloat[]{\includegraphics[height=0.286\columnwidth]{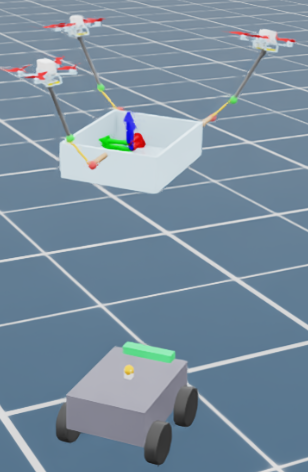}}\hfill
\subfloat[]{\includegraphics[height=0.286\columnwidth]{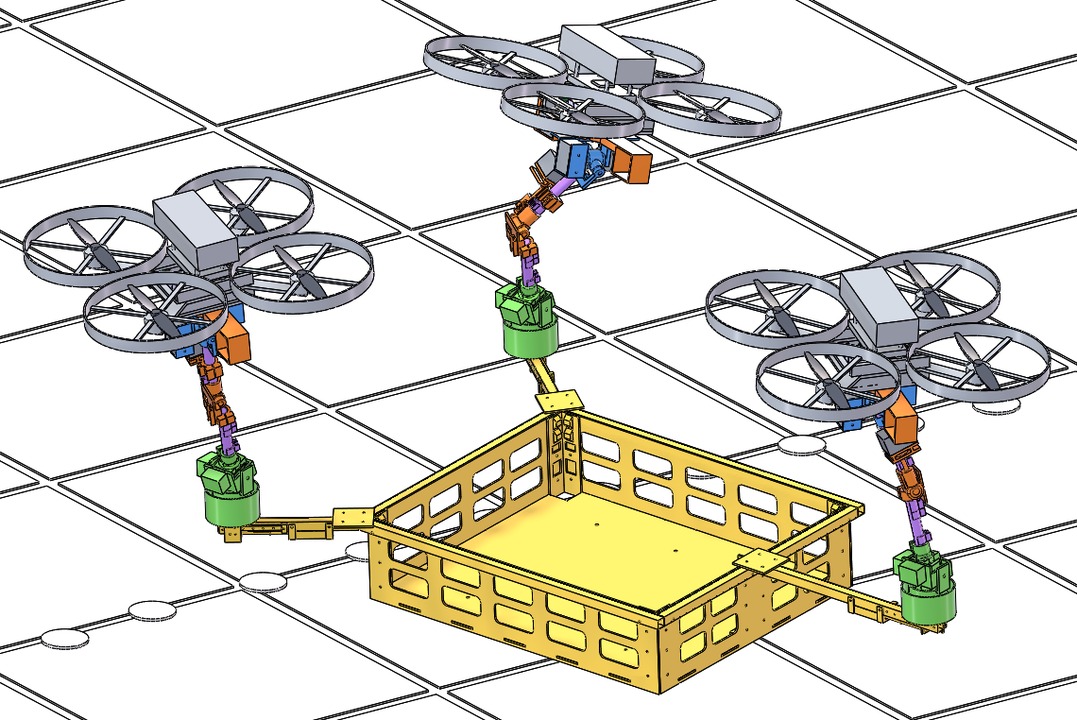}}
\caption{Isaac Sim snapshots at (a) the training wind bound (1.15\,N, \windSpeedTrain\,m/s equivalent), (b) twice that bound (2.30\,N, \windSpeedHigh\,m/s equivalent), and (c) \mbox{0.20-m} CoM deviation. Markers give goal and basket pose. In (d), SolidWorks verifies that the hardware design follows a CALM flight under wind, gusts, and the moving CoM within its joint travel and without part interference.}
\label{fig:simframes}
\end{figure}

\emph{Dec-MARL (arm)}, a decentralized MARL baseline, adapts \cite{zeng2025,aerialmaniplab2025marl} to CALM's platform, sharing the curriculum, plant, actor observations, observers, goal estimation, critic network, low-level controller, and translational ESO correction. Its original reward is kept, with the linear term of \(R_t^{\rm aux}\) added to induce arm motion. It omits the angular terms, formation-yaw feedforward, load-share couple, wrist-yaw offset, and privileged vector \(\eta_t\).

\emph{Dec-MARL (cable)} uses the open-source cable code and the reward of \cite{zeng2025}, retrained under CALM's disturbance curriculum.
The model-based controllers are \emph{Grasp allocation}, a quasi-static wrench allocation with PD formation in the grasp-map form of \cite{michael2011}, and \emph{RotorTM} \cite{li2024rotortm}, a cooperative controller for rigidly attached payloads.
Both use ground-truth state.

\subsection{Differential-drive ground robot tracking}
\label{sec:trackingresults}

CALM attains the lowest position and orientation errors on every trajectory family in Table~\ref{tab:matrix}, and paired comparisons favor it over Dec-MARL (arm) in all \pairedNomPositive{} families.
The margin is smallest on the \pairedNomSmallestCurve{}, whose long straight phases suit both arm controllers, and largest on the circle, perimeter, and contour, where sustained turns and U-turns dominate (Fig.~\ref{fig:tracking}).

Altitude error increases the 3-D position metric of Dec-MARL (cable) by 6--9\,cm relative to the planar trace, and its orientation error on the circle reaches \(123.2^\circ\) versus \(2.4^\circ\)--\(4.4^\circ\) for CALM.
The model-based controllers remain near \curveGraspMean{} and \curveRotorMean\,cm in position, with orientation errors of \curveGraspOri$^\circ$ and \curveRotorOri$^\circ$, respectively. Their low across-family SDs of \curveGraspSD{} and \curveRotorSD\,cm indicate that they fail consistently.
The anticipatory feedforward alone reduces the basket orientation error in the training distribution by 10--16\% at the median and 8--18\% at p90, mainly during turns, with no consistent position effect.

\subsection{Wind and moving-CoM sweeps}

In Fig.~\ref{fig:sweeps}(a) and Fig.~\ref{fig:simframes}(a,b), both arm controllers degrade with wind, and CALM's margin over Dec-MARL (arm) widens. CALM remains lower at every level tested, with every 95\% CI excluding zero.
At an equivalent wind speed of \windSpeedHigh\,m/s, Dec-MARL (cable) is less accurate than CALM, with a median UAV tilt of \tiltCable$^\circ$ and uncontrolled basket orientation.

From 0.05 to 0.20\,m CoM deviation (Fig.~\ref{fig:simframes}(c)), CALM's position error rises moderately, while Dec-MARL (cable) degrades steeply in position and remains at least \cableOriRatioMin$\times$ worse in orientation.
At 0.20\,m, Dec-MARL (arm) is marginally better than CALM, with a paired difference of \mbox{\comHighPairedDelta\,cm} (CI \mbox{[\comHighPairedLo,~\comHighPairedHi]}).
Clipping the CoM offset at \(\bar c\) places both arm controllers in the same torque-saturation regime. As a result, CALM's largest margins arise on turn-rich trajectories and in strong wind, and both controllers hold basket orientation.

From zero wind to twice the training bound (\windSpeedHigh\,m/s equivalent), CALM's median UAV tilt rises from \tiltCalmMin{} to \tiltCalmMax$^\circ$ and stays about \tiltPriorDelta$^\circ$ above that of Dec-MARL (arm), while its tilt p90 reaches \tiltCalmNinetyMax$^\circ$.
Under nominal disturbance, its position p90 is \pNinetyCalmMin--\pNinetyCalmMax\,cm versus \pNinetyPriorMin--\pNinetyPriorMax\,cm for Dec-MARL (arm).
In Fig.~\ref{fig:simframes}(d), SolidWorks verifies the hardware design.

\subsection{Arm joint-group ablation}
\label{sec:ablation}

All actuated joints move under nominal disturbance with mean torques close to their limits, and at 0.20-m CoM deviation the telescoping brackets mostly sit at their travel limits.
Table~\ref{tab:ablation} reports the effect of unpowering each joint group by zeroing its position-control stiffness over \(n=64\) paired trials.
Unpowering the full arm raises position and orientation errors the most, mainly through wrist roll and pitch, and unpowering only the shoulder and elbow still degrades both.
Wrist yaw matters less despite frequent saturation, as formation rotation provides the basket yaw (Sec.~\ref{sec:ff}).
The elbow is active, but unpowering it alone has no significant effect, as the attachment height follows the desired altitude fixed \textit{a priori} (Sec.~\ref{sec:sensing}).

\begin{figure}[t]
\centering
\includegraphics[width=\columnwidth]{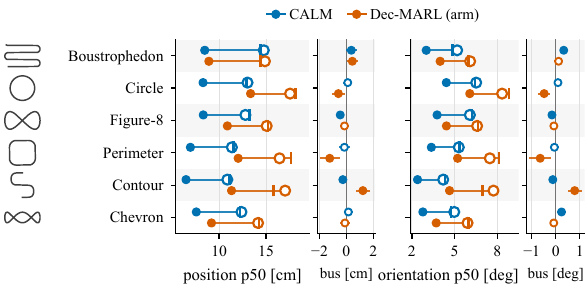}
\caption{Position and orientation with undegraded (filled), goal-degraded (tick), and goal-and-bus-degraded (open) information paths in \trialsPerCell{} paired trials. Narrow panels show bus-only changes with 95\% CIs. Filled ones exclude zero.}
\label{fig:fulldeg}
\end{figure}

\begin{table}[t]
\centering
\caption{Median-error ratio of each joint-group ablation to the matched unmodified controller, with bootstrap 95\% CIs.}
\label{tab:ablation}
\setlength{\tabcolsep}{4pt}
\footnotesize
\begin{tabular}{lcc}
\toprule
Unpowered group & Position & Orientation \\
\midrule
\input{table_ablation_body}
\end{tabular}
\end{table}

\subsection{Degradation of the full information path}
\label{sec:sensingresults}

In Fig.~\ref{fig:fulldeg}, the goal broadcast and the team bus are degraded as listed in Table~\ref{tab:params}, and none of these defects occurs in training. The UAV positions on the bus also carry a \busPosNoise-m localization error.
CALM leads on \fdCalmWins{} of \fdCurveCount{} trajectories in position and \fdCalmWinsOri{} of \fdCurveCount{} in orientation.
When only the team bus is degraded, the CIs include zero for both controllers, with a centroid error of \busCentroidErr\,cm and packet staleness of up to \busStaleWorst\,s.
Most degradation thus comes from the goal broadcast.

\section{Conclusion}
\label{sec:conclusion}

CALM, which combines a decentralized MARL with deterministic estimation and feedforward, outperforms all evaluated baselines.
CALM is decentralized in execution but not fully distributed, since its feedforward paths and mass estimate use team-bus quantities.
Future work will estimate them by dynamic average consensus over neighbor links and address hardware validation and formal closed-loop safety guarantees.

\appendices
\section{Proof of Lemma~\ref{prop:alloc}}
\label{app:lemma-proof}
\begin{IEEEproof}
By \eqref{eq:yawallocation}, the feedforward assigns to UAV \(i\) the horizontal force \(F_i=m_va_i^{\rm yaw}=(M_z^{\rm ff}/\hat S)\,e_3\times\hat\rho_i\), which uses the estimated offsets \(\hat\rho_i\) but acts at the true offsets \(\rho_i\).

\emph{Net force.} Each \(\hat\rho_i\) is relative to the mean of the estimated UAV positions, so \(\sum_i\hat\rho_i=0\) and \(\sum_iF_i=0\) for any true \(\rho_i\).

\emph{Yaw moment.} By the vector triple product, \(\rho_i\times(e_3\times\hat\rho_i)=(\rho_i\cdot\hat\rho_i)\,e_3-(\rho_i\cdot e_3)\,\hat\rho_i\).
Since \(\hat\rho_i\) is horizontal, the second term has no yaw component, so the total yaw moment about the centroid is
\begin{equation*}
M_z^{\rm alloc}=e_3\cdot\sum_i\rho_i\times F_i
=\frac{M_z^{\rm ff}}{\hat S}\sum_i\rho_i\cdot\hat\rho_i=\kappa M_z^{\rm ff},
\end{equation*}
and exact geometry \(\rho_i=\hat\rho_i\) gives \(\kappa=1\).

\emph{Gain bound.} Substituting \(\rho_i=\hat\rho_i+\tilde\rho_i\), where \(\tilde\rho_i\) is the offset error, into \(\kappa\) gives \(\kappa=\hat S^{-1}\sum_i(\hat\rho_i+\tilde\rho_i)\cdot\hat\rho_i=1+\hat S^{-1}\sum_i\tilde\rho_i\cdot\hat\rho_i\).
The Cauchy--Schwarz bound \(\lvert\tilde\rho_i\cdot\hat\rho_i\rvert\le\varepsilon\lVert\hat\rho_i\rVert\) then gives
\begin{equation*}
\lvert\kappa-1\rvert
\le\frac{1}{\hat S}\sum_i\lvert\tilde\rho_i\cdot\hat\rho_i\rvert
\le\frac{\varepsilon}{\hat S}\sum_i\lVert\hat\rho_i\rVert.
\end{equation*}
For equal estimated radii \(\hat r\), the bound is \(\varepsilon N\hat r/(N\hat r^2)=\varepsilon/\hat r\).
\end{IEEEproof}

\section*{Acknowledgement}
ChatGPT 6 Astra (OpenAI) was used to generate the upper-panel background and the lower-left arm detail in Fig.~\ref{fig:concept}.
The authors thank the Aerial Manipulation Lab for the framework in \cite{zeng2025,aerialmaniplab2025marl} and Prof. Elisa Capello, Head of the STREAM Robotics Lab at Politecnico di Torino, for her valuable guidance and support.

\bibliographystyle{IEEEtran}
\bibliography{references}
\end{document}

%% file: results_macros.tex
\newcommand{\curveCalmMean}{7.68}

\newcommand{\curvePriorMean}{10.95}

\newcommand{\curveCalmOri}{3.3}

\newcommand{\curvePriorOri}{4.7}

\newcommand{\curveGraspMean}{53}
\newcommand{\curveGraspSD}{1.9}
\newcommand{\curveRotorMean}{74}
\newcommand{\curveRotorSD}{1.5}
\newcommand{\curveGraspOri}{75}
\newcommand{\curveRotorOri}{92}

\newcommand{\windSpeedTrain}{7.1}
\newcommand{\windSpeedHigh}{10.0}

\newcommand{\cableOriRatioMin}{11}
\newcommand{\pNinetyCalmMin}{18}
\newcommand{\pNinetyCalmMax}{26}
\newcommand{\pNinetyPriorMin}{29}
\newcommand{\pNinetyPriorMax}{46}
\newcommand{\pairedNomSmallestCurve}{boustrophedon}

\newcommand{\pairedNomPositive}{6}

\newcommand{\comHighPairedDelta}{-0.08}
\newcommand{\comHighPairedLo}{-0.10}
\newcommand{\comHighPairedHi}{-0.06}
\newcommand{\tiltCalmMin}{7.5}
\newcommand{\tiltCalmMax}{9.5}
\newcommand{\tiltCalmNinetyMax}{13.6}
\newcommand{\tiltPriorDelta}{1.3}
\newcommand{\tiltCable}{47}

\newcommand{\fdCalmWins}{5}
\newcommand{\fdCalmWinsOri}{6}
\newcommand{\fdCurveCount}{6}
\newcommand{\busCentroidErr}{3.6}
\newcommand{\busStaleWorst}{4.7}

\newcommand{\busPosNoise}{0.05}
\newcommand{\traceEnvs}{32}
\newcommand{\trialsPerCell}{30}
\newcommand{\envsPerTrial}{96}

%% file: fig_architecture.tex
\definecolor{OIblue}{RGB}{0,114,178}
\definecolor{OIteal}{RGB}{0,158,115}
\definecolor{OIverm}{RGB}{213,94,0}
\definecolor{OIamber}{RGB}{230,159,0}
\tikzset{
  blk/.style={draw=black!75, thick, align=center, inner sep=3pt,
              fill=white},
  sig/.style={-{Stealth[length=2mm]}, semithick},
  trn/.style={-{Stealth[length=2mm]}, semithick,
              dash pattern=on 2pt off 1pt, draw=OIamber!60!black},
  dst/.style={-{Stealth[length=2mm]}, semithick,
              dash pattern=on 2pt off 1pt on 0.1pt off 1pt, draw=OIverm},
  shr/.style={-{Stealth[length=2mm]}, semithick,
              dash pattern=on 2pt off 1pt, draw=OIblue},
  slab/.style={font=\tiny, inner sep=1pt, fill=white},
  shrlab/.style={text=OIblue},
  trnlab/.style={text=OIamber!60!black},
  dstlab/.style={text=OIverm}}
\def\hopR{0.08}
\def\busY{-0.08}   %
\def\trnY{-0.24}   %
\scalebox{0.96}{%
\begin{tikzpicture}[x=1cm, y=1cm, font=\scriptsize]

\node[blk, fill=black!4, text width=15mm] (rob) at (0.95,2.7)
  {ground robot\\ (goal source)};
\node[blk, fill=black!4, text width=21mm] (chan) at (3.50,2.7)
  {goal broadcast\\
   {\tiny noise $\sigma_p,\sigma_\psi$}\\
   {\tiny\textcolor{OIverm}{$+$ rate / latency / dropout /}}\\
   {\tiny\textcolor{OIverm}{outliers (evaluation only)}}};
\node[blk, fill=OIblue!10, text width=27mm] (est) at (6.45,2.7)
  {per-UAV goal estimation {\tiny\textcolor{black!55}{(\ref{sec:estimation})}}\\
   $\alpha$-$\beta$ correct $+$\\ coordinated-turn predict\\
   {\tiny outputs $\hat\chi_t^g,\{\hat\delta^g_{t,k}\}_{k=1}^{K}$}};
\node[blk, fill=OIblue!16, text width=20mm] (pol) at (9.6,2.7)
  {per-UAV actors {\tiny\textcolor{black!55}{(\ref{sec:policy})}}\\
   $\pi_\theta(a_t^i\,|\,o_t^i)$, $\times 3$\\
   {\tiny parameter-shared $\theta$}};
\node[draw=black!75, thick, circle, inner sep=1.4pt, fill=white]
  (sum) at (11.36,2.7) {$+$};
\node[blk, fill=black!6, text width=15mm] (casc) at (12.81,2.7)
  {low-level\\ controller};
\node[blk, fill=black!8, text width=16mm, minimum height=11mm]
  (plant) at (15.31,2.7) {3 UAVs + arms\\ + basket};

\node[blk, draw=OIverm, fill=OIverm!8, text width=22mm,
      minimum height=6mm] (dist) at (15.31,4.0)
  {moving CoM $c_t$ (OU)\quad wind forces};
\draw[dst] (dist.south) -- (plant.north);

\draw[sig] (plant.east) -- (16.8,2.7) -- (16.8,4.65) -- (9.6,4.65)
  node[slab, pos=0.55, above=1pt]
  {$x^i_t$ includes UAV index, basket and UAV states, arm-joint positions and rates, and attachment-point error $e^i_t$}
  -- (pol.north|-{0,4.65}) -- (pol.north);

\draw[sig] (rob) -- (chan) node[slab, midway, above=2pt] {$g_t$};
\draw[sig] (chan) -- (est) node[slab, midway, above=2pt] {$y$};
\draw[sig] (est) -- (pol);
\draw[sig] (pol) -- (sum) node[slab, midway, above=2pt] {$a^i_t$};
\draw[sig] (sum) -- (casc);
\draw[sig] (casc.east|-{0,2.85}) -- (plant.west|-{0,2.85});
\draw[sig] (plant.west|-{0,2.5}) -- (casc.east|-{0,2.5})
  node[slab, midway, below=1pt, fill=none, align=center]
  {UAV\\state};

\node[blk, fill=OIteal!10, text width=40mm, minimum height=11mm]
  (ff) at (7.7,0.95)
  {anticipatory formation-yaw $+$\\
   load-share $+$ wrist-yaw feedforward {\tiny\textcolor{black!55}{(\ref{sec:ff})}}};
\node[blk, fill=OIteal!5, text width=21mm] (eso) at (13,0.95)
  {per-UAV estimators {\tiny\textcolor{black!55}{(\ref{sec:estimation})}}\\
   translational ESO $\hat d^{\,i}$\\ rotational ESO $z^i_2$};
\node[blk, draw=OIblue, densely dashed, fill=OIblue!6, text width=40mm,
      minimum height=9mm] (team) at (2.35,0.95)
  {team-bus signals {\tiny\textcolor{black!55}{(\ref{sec:estimation}, \ref{sec:ff}, \ref{sec:info})}}\\
   basket state $\omega_B$, $a_{B,z}$\\
   offsets $\hat\rho_i$, mean $\bar z_2$\\
   mass $\hat m_B^{\rm inst}\!\xrightarrow{t_m \text{ filter}} \!\hat m_B$\\
   {\tiny\textcolor{OIverm}{$+$ team-bus defects (evaluation only)}}};

\draw[sig] (est.south) -- (est.south|-ff.north)
  node[slab, pos=0.45, right=1pt] {$\hat\omega,\ \dot{\hat\omega}$};
\draw[sig] (plant.south) -- (15.31,0.95) -- (eso.east)
  node[slab, pos=0.2, above=2pt, xshift=0.1pt, fill=none, align=center] {state\\ $+$ \\thrust};
\draw[sig] (eso.west) -- (10.3,0.95) -- ({10.3,0}|-pol.south)
  node[slab, pos=-0.15, right=3pt, fill=none] {$\hat d^{\,i}\!,z^i_2\!\in o^i_t$};
\draw[shr] ($(plant.south)+(0.45,0)$) -- node[slab, shrlab, pos=0.78, right=1pt, fill=none, align=left]
  {UAV positions,\\ basket state,\\ $\sum_i\hat f_{i,z}$} (15.76,\busY)
  -- (2.35,\busY)
  -- (team.south);
\draw[shr] (eso.south) -- node[slab, shrlab, pos=0.46, right=2pt, fill=none] {$z^i_2$} (13,\busY);
\draw[shr] (12.55,\busY) -- node[slab, shrlab, pos=0.5, left=1pt, fill=none] {$\omega_B$} ({12.55,0}|-eso.south);
\draw[shr] ($(team.east)+(0,0.30)$)
  -- node[slab, shrlab, pos=0.55, above=1pt, fill=none] {$\hat\rho_i,\,\omega_{B,z}$} ($(ff.west)+(0,0.30)$);
\draw[shr] ($(team.east)-(0,0.30)$)
  -- node[slab, shrlab, pos=0.55, above=1pt, fill=none] {$\bar z_2$} ($(ff.west)-(0,0.30)$);
\draw[shr] (team.west) -- (-0.35,0.95) -- (-0.35,3.7)
  -- node[slab, shrlab, pos=0.42, above=1pt] {$\hat m_B\!\in o^i_t$} (9.0,3.7)
  -- ($(pol.north)-(0.6,0)$);
\draw[sig] (casc.south|-eso.north) -- (casc.south)
  node[slab, midway, right=1pt] {$-\hat d^{\,i}$};
\draw[sig] (9.83,1.3) -- ({10.3-\hopR},1.3)          %
  arc[start angle=180,end angle=0,radius=\hopR cm]
  -- (11.36,1.3) -- (sum.south);

\node[draw=black!55, densely dashed, fill=OIamber!7, rounded corners=3pt,
      inner sep=0pt, minimum width=14.9cm, minimum height=2.5cm,
      anchor=north west] at (2.0,-0.3) {};
\node[font=\tiny\itshape, anchor=north west, text=black!60]
  at (2.15,-0.38) {centralized training only (removed at deployment)};

\node[blk, text width=32mm, minimum height=12mm] (dr) at (4.3,-1.7)
  {domain randomization\\ wind $+$ moving CoM $+$ basket mass\\ 
   $+$  initial conditions $+$ goal motion};
\node[blk, text width=34mm, minimum height=12mm] (crit) at (8.6,-1.7)
  {disturbance-conditioned critic {\tiny\textcolor{black!55}{(\ref{sec:critic})}}\\
   return quantiles\\ lowest-quantile baseline};
\node[blk, text width=21mm, minimum height=12mm] (ppo) at (13.4,-1.7)
  {MAPPO update};
\node[blk, text width=15mm, minimum height=9mm] (rew) at (15.85,-1.7)
  {team reward $R_t$ {\tiny\textcolor{black!55}{(\ref{sec:critic})}}};
\draw[trn] (rew.west) -- (ppo.east);
\draw[trn] (dist.east) -- ({16.8-\hopR},4.0)         %
  arc[start angle=180,end angle=0,radius=\hopR cm]
  -- (17.1,4.0)
  -- (17.1,-1.7) node[slab, trnlab, pos=0.45, left=1pt, fill=none] {true wind}
  -- (rew.east);

\draw[trn] (dr.south) -- (4.3,-2.62)
  -- node[slab, trnlab, pos=0.85, above=1pt, fill=none] {mass, CoM, wind draws}
  (17.55,-2.62)
  -- (17.55,2.35)
  -- ({17.1+\hopR},2.35)                                %
  arc[start angle=0,end angle=180,radius=\hopR cm]
  -- (plant.east|-{0,2.35});
\draw[trn] (14.7,2.14) -- (14.7,{0.95+\hopR})        %
  arc[start angle=90,end angle=-90,radius=\hopR cm]
  -- (14.7,{\busY+\hopR})                              %
  arc[start angle=90,end angle=-90,radius=\hopR cm]
  -- (14.7,-0.75)
  -- (8.6,-0.75) node[slab, trnlab, pos=0.5, above=1pt, fill=none] {$s_t$, privileged $\eta_t$}
  -- (crit.north);
\draw[trn] (crit.east) -- (ppo.west)
  node[slab, trnlab, midway, above=1pt, fill=none] {estimated advantage};
\draw[trn] (ppo.north) -- (13.4,{-0.75-\hopR})       %
  arc[start angle=-90,end angle=90,radius=\hopR cm]
  -- (13.4,\trnY) -- (10.0,\trnY)
  -- (10.0,{\busY-\hopR})                              %
  arc[start angle=-90,end angle=90,radius=\hopR cm]
  -- (10.0,{1.30-\hopR})                                %
  arc[start angle=-90,end angle=90,radius=\hopR cm]
  -- ({10.0,0}|-pol.south)                              %
  node[slab, trnlab, pos=0.78, left=1pt, fill=none] {$\nabla_\theta$};

\draw[sig] (2.3,-3.1) -- (2.8,-3.1);
\node[anchor=west, font=\tiny] at (2.85,-3.1) {execution signal};
\draw[trn] (5.6,-3.1) -- (6.1,-3.1);
\node[anchor=west, font=\tiny, trnlab] at (6.15,-3.1) {training only};
\draw[dst] (8.5,-3.1) -- (9.0,-3.1);
\node[anchor=west, font=\tiny, dstlab] at (9.05,-3.1)
  {unmeasured physical disturbance};
\draw[shr] (12.4,-3.1) -- (12.9,-3.1);
\node[anchor=west, font=\tiny, shrlab] at (12.95,-3.1)
  {team-bus signal};
\end{tikzpicture}}

%% file: table_main_body.tex
CALM (ours) & \textbf{8.5$\pm$0.4/3.0$\pm$0.1} & \textbf{8.3$\pm$0.4/4.4$\pm$0.2} & \textbf{8.3$\pm$0.4/3.8$\pm$0.2} & \textbf{6.9$\pm$0.3/3.4$\pm$0.2} & \textbf{6.5$\pm$0.3/2.4$\pm$0.1} & \textbf{7.6$\pm$0.4/2.8$\pm$0.1} \\
Dec-MARL (arm) & 8.9$\pm$0.5/4.0$\pm$0.2 & 13.4$\pm$0.8/6.1$\pm$0.4 & 10.9$\pm$0.7/4.4$\pm$0.3 & 12.0$\pm$0.8/5.2$\pm$0.3 & 11.3$\pm$0.6/4.6$\pm$0.3 & 9.2$\pm$0.5/3.7$\pm$0.2 \\
Dec-MARL (cable) & 19.2$\pm$0.7/19.5$\pm$0.4 & 18.1$\pm$0.6/123.2$\pm$1.2 & 19.6$\pm$0.6/26.5$\pm$0.7 & 18.7$\pm$0.6/19.4$\pm$0.3 & 18.8$\pm$0.6/19.5$\pm$0.3 & 18.7$\pm$0.6/19.4$\pm$0.4 \\
Grasp allocation & 57.0$\pm$1.7/75.3$\pm$3.3 & 51.5$\pm$2.6/75.2$\pm$4.2 & 52.5$\pm$2.5/76.9$\pm$5.0 & 52.9$\pm$2.1/75.3$\pm$4.8 & 52.2$\pm$1.8/73.3$\pm$3.9 & 51.7$\pm$2.9/74.1$\pm$5.4 \\
RotorTM & 76.8$\pm$1.0/91.1$\pm$2.9 & 72.6$\pm$1.2/92.1$\pm$5.3 & 73.9$\pm$0.9/91.9$\pm$4.0 & 72.9$\pm$1.6/90.3$\pm$4.2 & 72.6$\pm$1.2/92.2$\pm$3.4 & 72.8$\pm$1.2/92.1$\pm$4.6 \\
\bottomrule

%% file: table_ablation_body.tex
all seven joints (passive arm) & 2.93 [2.29, 3.49] & 5.52 [4.28, 6.80] \\
wrist roll and pitch & 2.64 [2.33, 3.44] & 5.35 [4.29, 7.31] \\
shoulder and elbow (wrist active) & 1.49 [1.28, 1.75] & 1.97 [1.58, 2.46] \\
shoulder roll and pitch & 1.33 [1.20, 1.55] & 1.55 [1.36, 1.90] \\
wrist yaw & 1.17 [1.05, 1.32] & 1.15 [0.98, 1.31] \\
shoulder yaw & 1.07 [0.91, 1.31] & 1.19 [0.99, 1.41] \\
elbow & 0.92 [0.83, 1.08] & 1.01 [0.85, 1.14] \\
\bottomrule